\def\eqref#1{equation~\ref{#1}}
\def\1{\bm{1}}
\def\vs{{\bm{s}}}
\def\vv{{\bm{v}}}
\def\vx{{\bm{x}}}
\DeclareMathAlphabet{\mathsfit}{\encodingdefault}{\sfdefault}{m}{sl}
\SetMathAlphabet{\mathsfit}{bold}{\encodingdefault}{\sfdefault}{bx}{n}
\newcommand{\defeq}{\vcentcolon=}
\newcommand*{\ldblbrace}{\{\mskip-5mu\{}
\newcommand*{\rdblbrace}{\}\mskip-5mu\}}
\def\1{{\mathbf{1}}}
\def\vs{{\bm{s}}}
\newcommand{\gray}[1]{\textcolor{gray}{#1}}
\newtheorem{definition}{Definition}[section]
\newtheorem{proposition}{Proposition}[section]
\newtheorem{theorem}{Theorem}[section]
\newtheorem{lemma}{Lemma}[section]
\newcommand{\pmn}[1]{{\scriptsize $\pm$ {#1}}}
\title{Multi-Scale Protein Structure Modelling with Geometric Graph U-Nets}
\author{%
  Chang~Liu$^{*}$ \\
  University of Cambridge \\
  \texttt{cl962@cantab.ac.uk}
  \And
  Vivian~Li$^{*}$ \\
  University of Cambridge \\
  \texttt{vdl21@cam.ac.uk}
  \And
  Linus~Leong \\
  University of Cambridge \\
  \texttt{yhll2@cantab.ac.uk} \\
  \And
  Vladimir~Radenkovic \\
  University of Cambridge \\
  \texttt{vr375@cam.ac.uk} \\
  \And
  Pietro~Li\`o \\
  University of Cambridge \\
  \texttt{pl219@cam.ac.uk}
  \And
  Chaitanya~K.~Joshi \\
  University of Cambridge \\
  \texttt{ckj24@cam.ac.uk} \\
}
\begin{document}

\maketitle
\def\thefootnote{*}\footnotetext{Equal contributions. See Appendix \ref{app:contrib} for detailed author contributions.}

\begin{abstract}
Geometric Graph Neural Networks (GNNs) and Transformers have become state-of-the-art for learning from 3D protein structures. However, their reliance on message passing prevents them from capturing the hierarchical interactions that govern protein function, such as global domains and long-range allosteric regulation. 
In this work, we argue that the network architecture itself should mirror this biological hierarchy. 
We introduce Geometric Graph U-Nets, a new class of models that learn multi-scale representations by recursively coarsening and refining the protein graph. 
We prove that this hierarchical design can theoretically more expressive than standard Geometric GNNs. 
Empirically, on the task of protein fold classification, Geometric U-Nets substantially outperform invariant and equivariant baselines, demonstrating their ability to learn the global structural patterns that define protein folds. 
Our work provides a principled foundation for designing geometric deep learning architectures that can learn the multi-scale structure of biomolecules.
\end{abstract}

\section{Introduction}

Proteins are not just bags of atoms; they are hierarchically organized structures, where local motifs like alpha-helices and beta-sheets assemble into functional domains, which in turn form the global quaternary structure \citep{richardson2008}. 
This multi-scale organization is critical for protein function, as local interactions (e.g., hydrogen bonds) combine with long-range effects (e.g., allosteric regulation) to determine how a protein behaves in its biological context \citep{motlagh2014}.
Standard geometric GNNs generally rely on local, fixed-radius message passing \citep{gilmer2017neural} and thus overlook the hierarchical, compositional organisation of proteins.
Existing fixes such as virtual nodes \citep{sestak2024vn} or ad hoc long-range edges \citep{ingraham2023illuminating, jumper2021alphafold} are heuristic and fail to provide a principled, continuous multi-scale representation of protein structure. 

Here, we introduce the Geometric Graph U-Net, a novel architectural blueprint that explicitly models the multi-scale nature of proteins by integrating geometric pooling layers within a hierarchical U-Net structure \citep{ronneberger2015u}. 
Further, we provide a rigorous theoretical foundation for this approach, extending the Geometric Weisfeiler-Leman (GWL) test \citep{joshi2023expressive} to prove that our hierarchical architecture is strictly as or more expressive than standard message-passing GNNs. 
Finally, we empirically evaluate our model's performance on protein fold classification, where Geometric U-Nets consistently improve performance over invariant and equivariant baselines.

\begin{figure}[t!]
    \centering
    \includegraphics[width=\linewidth]{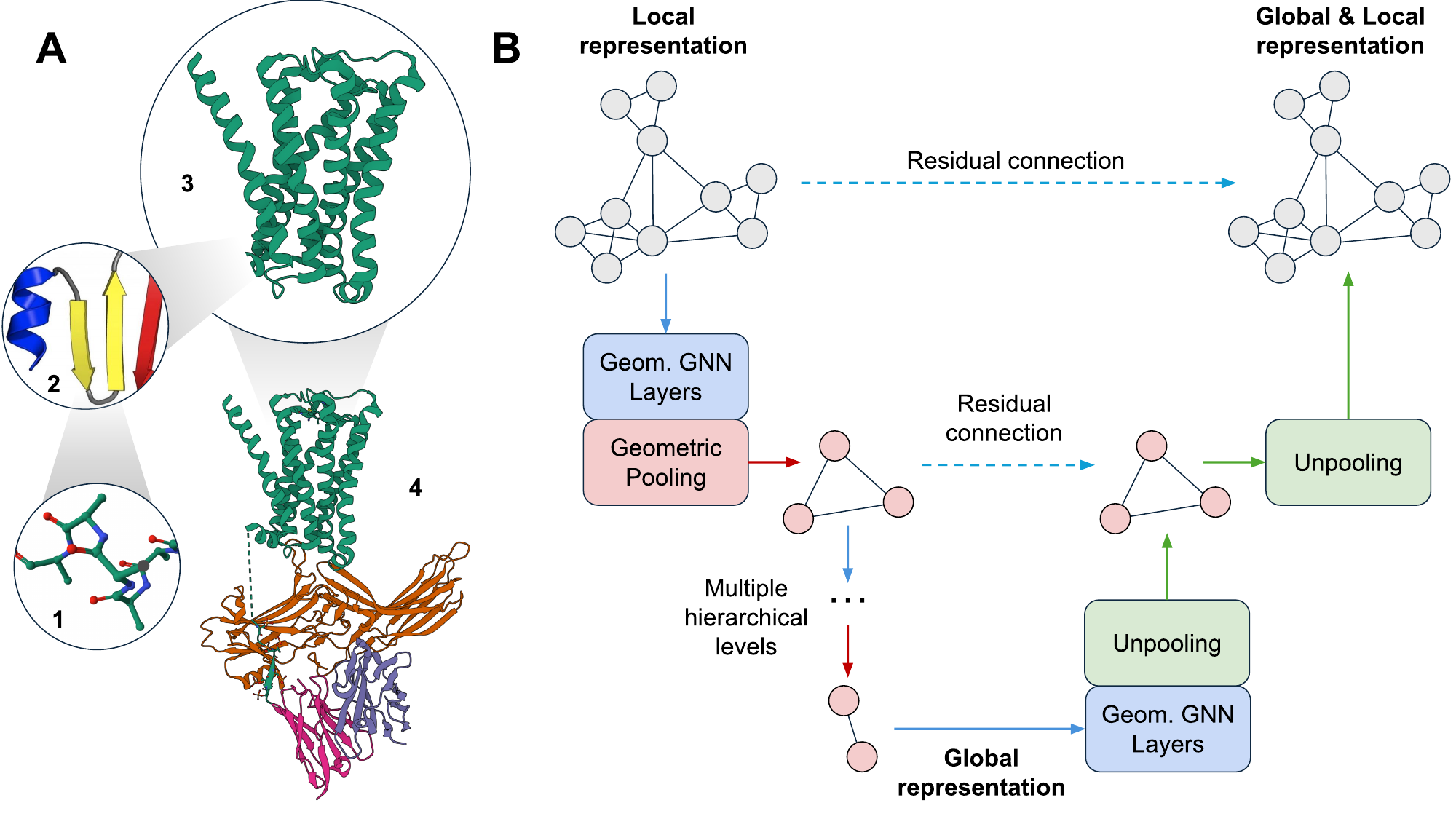}
    \caption{\textbf{Geometric Graph U-Nets for multi-scale protein representation learning.} (A) Protein function is governed by a structural hierarchy. Local secondary structures fold into tertiary domains, which assemble into quaternary complexes to perform complex functions like allosteric signaling. Standard GNNs struggle to capture these long-range interactions. (B) The Geometric Graph U-Net is a biologically inspired architecture that encodes a protein graph by progressive coarsening, moving from local representations to global ones. The decoder then refines this information, using residual connections to re-inject high-resolution local information at each level. Thus, the final representation integrates both global context and local geometric details.}
    \label{fig:unet-diagram}
\end{figure}


\section{Background}

\textbf{Geometric Graph Neural Networks (GNNs)} have become the standard toolkit for representation learning of 3D atomic systems such as biomolecules \cite{zhang2025artificial}.
We define a geometric graph $\mathcal{G} = (\mathcal{A}, \mathbf{S}, \vec{\mathbf{V}}, \vec{\mathcal{X}})$ as an attributed graph with adjacency matrix $\mathbf{A}$ of size $N \times N$, scalar features  $\mathbf{S} \in \mathbb{R}^{N \times f}$ and geometric attributes: node coordinates $\vec{\mathcal{X}}\in \mathbb{R}^{N \times d}$ and optional vector features $\vec{\mathbf{V}}\in \mathbb{R}^{N \times d}$. Feature dimensions $f$ and $d$ represent the scalar feature channel dimensions and the Euclidean space dimensions, respectively. We say two geometric graphs are \textit{geometrically isomorphic} if the underlying attributed graphs are isomorphic and their geometric attributes are equivalent, up to global group actions like rotation and reflection.

\citet{duval2024hitchhikers} offers a comprehensive review of geometric GNN models and the different symmetries that are encodable in different message-passing layers. 
We consider two classes of geometric GNN architectures: using invariant \cite{schutt2017schnet, zhang2023protein, gasteiger2020fast, fan2022continuous} or equivariant \cite{jing2020learning, morehead2024geometry, satorras2021n} geometric GNN layers.
\textbf{Invariant GNN layers} only propagate invariant scalar features computed using geometric information within local neighbourhoods, following the equation:
    $$\vs_i^{(t+1)} \defeq \textsc{Upd} \Big( \vs_i^{(t)} \ ,
    \textsc{Agg} \left( \ldblbrace (\vs_i^{(t)}, \vec{\vv}_i^{(t)}), (\vs_j^{(t)},  \vec{\vv}_j^{(t)}), \vec{\vx}_{ij} \mid j \in \mathcal{N}_i \rdblbrace \right) \Big).$$
\textbf{Equivariant GNN layers} update both scalar and geometric vector features from iteration $t$ to $t+1$ via learnable aggregate and update functions, $\textsc{Agg}$ and $\textsc{Upd}$, respectively:
    $$\vs_i^{(t+1)}, \vec{\vv}_i^{(t+1)} \defeq \textsc{Upd} \Big( (\vs_i^{(t)}, \vec{\vv}_i^{(t)}) \ ,
    \textsc{Agg} \left( \ldblbrace (\vs_i^{(t)}, \vec{\vv}_i^{(t)}), (\vs_j^{(t)},  \vec{\vv}_j^{(t)}), \vec{\vx}_{ij} \mid j \in \mathcal{N}_i \rdblbrace \right) \Big),$$
where $\vec{\vx}_{ij} = \vec{\vx}_{i} - \vec{\vx}_{j}$ denote relative position vectors.
As illustrated above, each geometric GNN layer aggregates information from a node's neighbourhood, meaning information in GNNs spreads in terms of $k$-hop \textit{receptive fields}, where $k$ is the number of GNN layers.

Geometric GNNs excel in capturing local relationships between nodes, but they often struggle to model global relations, usually adding more layers to enlarge the receptive field.
This enlarged receptive field may cause \textit{over-squashing}, as fixed-size feature vectors cannot encode the added information \cite{alon2021bottleneck}, limiting model expressivity \cite{digiovanni2023oversquashing}.
In this paper, we will solve this problem through hierarchical GNNs which model multi-scale interactions.

\textbf{Hierarchical Geometric GNNs} enhance global representation by reducing node distances and enlarging the receptive field. Prior approaches fall into three main categories: 
(1) Local pooling clusters nodes at each GNN layer to form new nodes for the next layer, with approaches \cite{qi2017pointnet, qi2017pointnet++, qian2022pointnext, ying2019diffpooling, ranjan2020asap, liu2021hierpooling} ranging from basic clustering to learnable and attention-based adaptive pooling. Our research extends the benefits of pooling via a U-Net, of which the effectiveness has been demonstrated in prior studies \cite{hu2019runet,jiang2024vision,shen2023graph}. 
(2) Virtual nodes create artificial connections between distant nodes and can embed domain knowledge in edge construction, with methods \cite{zang2023molecular, gu2023heal, sestak2023vnegnn} decomposing geometric graphs into virtual supernodes for structural motifs. In contrast, we explore multi-layer hierarchies to model both intrinsic structures and global interactions. 
(3) Virtual edges enable information flows across distant residues, with some approaches \cite{ingraham2023illuminating} adding random long-range connections. 
Transformer networks \citep{vaswani2017attention, jumper2021alphafold} also fall into this category, as they can be viewed as message passing on fully-connected graphs \citep{joshi2020transformers}.
In contrast, our method builds edges by sampling nearby nodes, reducing randomness and forming long-range connections between structurally similar nodes via hierarchies.

\textbf{GNN Expressivity} formally characterizes how architectural ideas influence the class of functions a GNN can represent \citep{raghu2017expressive, xu2018how, morris2019weisfeiler}.
\citet{joshi2023expressive} proposes the Geometric Weisfeiler-Leman (GWL) test, a geometric adaptation of the Weisfeiler-Leman test that measures expressivity of Geometric GNNs by their ability to distinguish geometric graphs while preserving 3D symmetries.

The GWL test operates in iterations: in each iteration, graph nodes update their colour by aggregating both scalar and geometric information in its neighbourhood with $\mathfrak{G}$-orbit injective and $\mathfrak{G}$-invariant function over Lie groups $\mathfrak{G} = SO(d)$ or $\mathfrak{G} = O(d)$. Auxiliary geometric information variables are additionally updated by aggregating local geometric information around each node with injective and $\mathfrak{G}$-equivariant aggregation. The test then terminates when the colours of the nodes do not change from the previous iteration, and the two graphs are geometrically non-isomorphic if multi-sets of their node features are different: 
\begin{definition}
    Graphs $\mathcal{G}_1$ and $\mathcal{G}_2$ are \textbf{k-GWL distinguishable} ($\mathcal{G}_1\neq_{k\text{-}GWL}\mathcal{G}_2$) if there exists a test iteration $i\leq k$ for which their multisets of node features are different.
\end{definition}

\citet{joshi2023expressive} additionally defines the Invariant GWL (IGWL) test that only updates node colours using $k$-hop neighbourhood scalar and geometric information with a $\mathfrak{G}$-orbit injective and invariant function without updating the geometric representation of nodes:
\begin{definition}
    Graphs $\mathcal{G}_1$ and $\mathcal{G}_2$ are \textbf{IGWL distinguishable} ($\mathcal{G}_1\neq_{IGWL}\mathcal{G}_2$) if there exists a test iteration $i$ for which their multisets of node features are different.
\end{definition}

Building on GWL and IGWL, \citet{joshi2023expressive} provides an upper bound on the expressive power of both invariant and equivariant GNNs by proving that equivariant GNNs are at most as powerful as GWL at distinguishing non-isomorphic geometric graphs, while invariant GNNs are limited by the IGWL at the same task. Thus, a theoretical framework for proving the expressivity of new GNN architectures can be established by showing the GWL- or IGWL-distinguishability of graphs under the new model.
We introduce an additional definition for analyzing pooling expressivity:
\begin{definition}
    Graphs $\mathcal{G}_1$ and $\mathcal{G}_2$ are \textbf{currently GWL or IGWL distinguishable} ($\mathcal{G}_1\neq_{C\text{-}GWL/IGWL}\mathcal{G}_2$) if their multisets of node features are different at the current iteration $i$, according to the respective GWL or IGWL test.
\end{definition}


\section{Geometric Graph U-Nets}

Our goal in designing a geometric pooling operation is threefold. The operation must be (1) geometrically meaningful, preserving the 3D structure and respecting physical symmetries; (2) computationally efficient, allowing for the construction of deep, multi-level hierarchies; and (3) architecturally compatible with both invariant and equivariant Geometric GNNs layers.

\subsection{Geometric Pooling Layers}

To define the geometric pooling layers, we follow the Select-Reduce-Connect (SRC) framework introduced in \citet{grattarola2021pooling}, where \texttt{POOL} is the combination of three functions: \textit{selection} (\texttt{SEL}), \textit{reduction} (\texttt{RED}), and \textit{connection} (\texttt{CON}). \texttt{SEL} clusters the nodes of the input graph into subsets called \textit{supernodes}, producing cluster assignment matrix $\mathbf{C}$ where $c_{ij}$ is the membership score of node $i$ to supernode $j$. \texttt{RED} creates the pooled node features by aggregating the features of the nodes assigned to the same supernode:
\[\mathtt{RED}(\mathcal{G}, \mathbf{C})\mapsto \mathbf{S}^{P}\]
where $|P|$ denotes node cardinality of the pooled graph $\mathcal{G}_P$  Finally, \texttt{CON} constructs new edges between supernodes:
\[\mathtt{SEL}(\mathcal{G}, \mathbf{C})\mapsto \mathbf{A}'\]
We outline the shared components for these functions among all of our proposed pooling layers below:

\textbf{Selection ($\texttt{SEL})$: } Inspired by PointNet++ \cite{qi2017pointnet++} and recognizing that each geometric graph node corresponds to physical coordinates, we employ the farthest point sampling (FPS) for graph coarsening, where sampled nodes  $\mathcal{V}_P = \{v_{1}, \ldots, v_{K}\}$ define the centres of supernode clusters $\mathbf{C} = \{ \mathbf{C}_1, \ldots, \mathbf{C}_K \}$, while the rest of the nodes are assigned to one or more supernodes based on proximity. By having supernodes adopt the coordinates of their central nodes, we ensure that the physical interpretability and properties of the coarsened graph are preserved. This method effectively sparsifies the graph by selectively thinning dense protein representations. We empirically set the sampling ratio for FPS in each pooling layer at 0.6.

\textbf{Reduction ($\texttt{RED})$: } The reduction function (\texttt{RED}) creates the pooled node features by aggregating the features of the vertices assigned to the same supernode. We extend reduction function $\texttt{RED}$ to update node coordinates $\vec{\mathcal{X}}\in\mathbb{R}^{N \times d}$ and vector features $\vec{\mathbf{V}}\in\mathbb{R}^{N \times v}$:
\[\mathtt{RED}(\mathcal{G,\mathbf{C}}) \mapsto (\mathbf{S}_P, \vec{\mathbf{V}}_P, \vec{\mathcal{X}}_P)\]

\textbf{Connection ($\texttt{CON})$: } The edge generation process in coarsened graphs is guided by the need for pooled graphs to retain regular structure based on the proximity of nodes. To achieve this, we adopt the K-nearest neighbours (K-NN) technique, with a fixed value of $k=16$ in our experiments, to form edges between supernodes, just as the original graphs in the benchmark datasets are constructed.

This design of $\texttt{SEL}$ and $\texttt{CON}$ allows for flexible, controlled, and adaptive graph size reduction and edge set design. This way, we aim to construct geometric pooling layers specifically designed for graphs structured from point clouds, as often used in biochemistry and material science applications.

\begin{figure}[ht]
    \centering
    \includegraphics[width=0.8\textwidth]{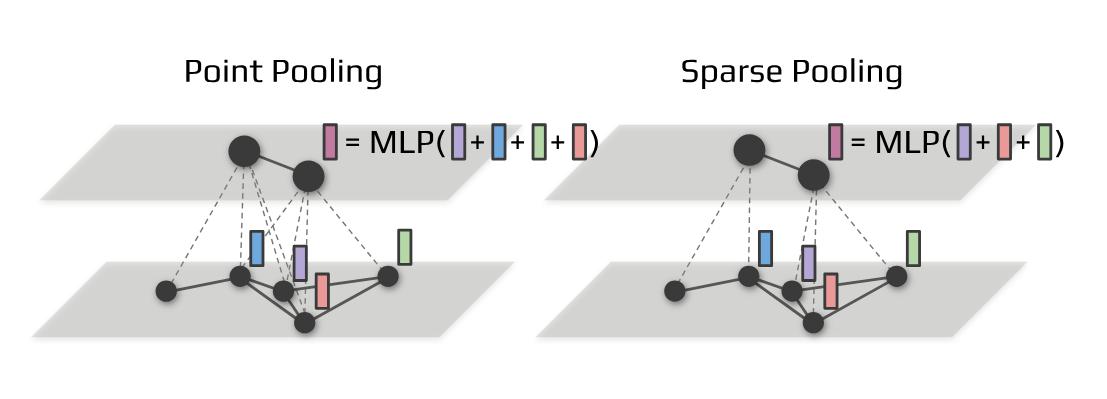}
    \caption{\textbf{Comparison of Point Pooling and Sparse Pooling layers.} Both pooling layers use farthest point sampling (FPS) to select supernode centers from the original graph. (Left) Point Pooling assigns all 1-hop neighbors of each supernode to its cluster, creating overlapping regions that capture local structural motifs. (Right) Sparse Pooling assigns each node to its nearest supernode, partitioning the graph into non-overlapping regions. Both layers aggregate features from assigned nodes to compute supernode representations.}
    \label{fig:pooling-comparison}
\end{figure}

\subsection{Point Pooling Layer}

Point Pooling Layers adapt the pooling operation used for point cloud data in PointNet++ \cite{qi2017pointnet++} to protein structures. This layer creates supernodes that represent local structural motifs by aggregating information from their immediate 1-hop neighborhood, akin to identifying a patch of the protein surface. For each pair of nodes $v_i\not\in \mathcal{V}_P$ and supernodes $v^{sn}_j$, where $v_i$ denotes the i-th node, $\mathcal{V}_P$ stands for the sampled node set, and $v^{sn}_j$ is the j-th supernode, we define the cluster assignment matrix $\mathbf{C}$, as the following:
\begin{equation}
    \mathbf{C}_{j, i} = 
    \begin{cases} 
        1 & \text{if } i \in \mathcal{N}_j \\
        0 & \text{otherwise}
    \end{cases}
\end{equation}
As a reduction function for scalar features, we use the message-passing layer from GIN \cite{xu2018how} to update the features of each pooled supernode $j \in \mathbf{C}$:
\begin{align}
    \mathbf{s}^{sn}_j \ = \ \text{MLP}(\mathbf{C}_{j}^\intercal\mathbf{S}) \ = \ \text{MLP}\bigl(s_j + \sum_{i\in \mathcal{N}_j}s_i\bigr),
\end{align}
where $\mathbf{C}_j$ represents row $j$ in the cluster assignment matrix. 
For the reduction of vector features, we find that a summation aggregation of neighbouring nodes' vector features of supernodes, without introducing non-linear transformations, provides the most effective results. For each pooled supernode $j \in \mathbf{C}$ we define reduction as:
\begin{align}
    \vec{\mathbf{v}}^{sn}_j \ = \ \mathbf{C}_j^\intercal\vec{\mathbf{V}} \ = \ \vec{\mathbf{v}}_j + \sum_{i\in\mathcal{N}_j} \vec{\mathbf{v}}_i,
\end{align}
which ensures that updated scalar and vector features of coarsened point clouds remain invariant and equivariant to both SO(3) transformations.
This pooling layer efficiently extracts representations of subgraphs surrounding each sampled node within the original graphs. In this way, supernodes encapsulate not only information about individual nodes they physically represent, but also about the broader molecular regions around them. 
While this layer possesses an injective reduction function for scalar features, the combined selection and reduction function ($\texttt{SEL} \circ \texttt{RED}$) is not injective and thus does not fully satisfy the sufficient conditions required for maintaining expressivity. Nevertheless, the simplicity of this layer, along with our later proof with IGWL for its expressivity, makes it a compelling choice for an initial pooling layer.

\subsection{Sparse Pooling Layer}

Sparse Pooling Layers assign each node to its nearest supernode, effectively partitioning the structure into distinct, non-overlapping regions. For each pair of nodes $v_i\not\in \mathcal{V}_P$ and 
supernodes $v^{sn}_j \in \mathcal{V}_P$ we define cluster assignment matrix $\mathbf{C}$, as the following:
\begin{equation}
    \mathbf{C}_{j,i} =
\begin{cases}
  1 & \text{if } i = \underset{k \neq j}{\arg\min} \, \|\mathbf{x}_j - \mathbf{x}_k\|_2 \text{ or } i = j \\
  0 & \text{otherwise}
\end{cases}
\end{equation}
For the reduction function, we use the summation of assigned nodes to update the scalar and vector features of supernodes:
\begin{align}
    \mathbf{s}_j^{sn} = \mathbf{C}_j^{\intercal}\mathbf{S}, \quad\quad\quad \vec{\mathbf{v}}_j^{sn} &= \mathbf{C}_j^{\intercal}\vec{\mathbf{V}}
\end{align}
The cluster assignment matrix of this layer is sparse, with each row containing a single non-diagonal, non-zero entry (set as 1) indicating supernode affiliation. This pooling layer is called \textit{Sparse} due to the constant cardinality $O(1)$ of each supernode \cite{grattarola2021pooling}. 
As we will see later, if the preceding message passing layers for sparse pooling are sufficiently expressive and satisfy the first condition of Theorem \ref{thm:maintain_expressivity}, this pooling layer satisfies the second condition and maintains the same expressive power.

\subsection{U-Nets with Geometric Pooling Layers}
\label{sec:U-Net}

We build upon the framework of \citet{jamasb2024evaluating}, which uses a Geometric GNN encoder to generate embeddings for 3D molecular graphs. We extend this encoder with a hierarchical U-Net architecture comprising encoder and decoder paths with pooling and unpooling blocks.
Our encoder consists of six message-passing layers with a geometric pooling layer inserted after every two message-passing layers, creating three hierarchical levels.
Each pooling layer progressively coarsens the graph by clustering nodes into supernodes, enabling connections between longer-range structural elements.

Formally, given an input graph $\mathcal{G}^{i-1}$ with $|\mathcal{V}^{i-1}| = N$ nodes, the $i$-th pooling block outputs a pooled graph $\mathcal{G}^{i}$ with a reduced number of nodes $|\mathcal{V}^{i}| = K \leq N$, where $\mathcal{G}^{i} = \mathtt{POOL}(\mathcal{G}^{i-1})$ and node features $(\mathbf{S}^{i}, \vec{\mathbf{V}}^{i}) = \mathtt{GNN}(\mathcal{G}^{i})$. Here, $\texttt{POOL}$ denotes the geometric pooling layer and $\texttt{GNN}$ represents the message-passing layer that updates the scalar and vector features $\mathbf{S}^{i}$ and $\vec{\mathbf{V}}^{i}$ of the pooled graph $\mathcal{G}^{i}$.
Thus, the U-Net architecture is compatible with any invariant or equivariant Geometric GNN layers. 

Each unpooling block integrates pooled and unpooled features through an unpooling layer. Given input graph $\mathcal{G}^{j-1}$ with $|\mathcal{V}^{j-1}| = K$ nodes, the $j$-th unpooling block outputs an unpooled graph $\mathcal{G}^{j}$ with a larger number of nodes $|\mathcal{V}^{j}| = N \geq K$, where $\mathcal{G}^{j} = \mathtt{UNPOOL}(\mathcal{G}^{j-1})$.
As supernodes retain physical information, the unpooled graph is reconstructed by reintroducing removed nodes with learnable scalar and vector features initialised to zeros, with a skip connection concatenating cached features from the corresponding pooling blocks to the unpooling output.
An overview of this hierarchical U-Net architecture is shown in Figure \ref{fig:unet-diagram}.

\textbf{Intuition.} 
The encoder path progressively coarsens the graph, moving from a fine-grained atomic representation to a coarse-grained, domain-level view. This allows subsequent message-passing layers to operate on nodes that represent entire sub-structures, enabling efficient propagation of information across long sequence distances that would be intractable in the original graph.
The decoder path refines this coarse representation back to the atomic level. Crucially, the skip connections re-inject high-resolution local information from the encoder at each stage. This architectural feature is vital, as it allows the model to use global context (e.g., the relative orientation of two domains) to make precise, localized predictions (e.g., about the specific atoms at the domain interface).

\section{Expressive Power of Geometric Pooling}
\label{sec:pooling-layer-expressivity}

We study expressivity through the lens of distinguishing non-isomorphic geometric graphs, adopting the Geometric Weisfeiler–Leman (GWL) test \citep{joshi2023expressive}. 
We then extend the hierarchical expressivity framework introduced for (non-geometric) pooling by \citet{bianchi2023expressive} and \citet{lachi2023graph} to the geometric setting, enabling analysis of when pooling preserves or strengthens GWL (and IGWL) distinguishing capacity.
We first introduce key definitions for analyzing pooling expressivity:
\begin{definition}
    An invariant graph pooling layer $\mathtt{POOL} = (\mathtt{SEL}, \mathtt{RED}, \mathtt{CON})$ \textbf{maintains expressivity} if it maps any pair of currently IGWL-distinguishable graphs (CIGWL) to a pair of IGWL-distinguishable graphs, i.e., $\mathcal{G}_{1}\neq_{CIGWL}\mathcal{G}_2 \Rightarrow \mathtt{POOL}(\mathcal{G}_1)\neq_{IGWL} \mathtt{POOL}(\mathcal{G}_2)$.
\end{definition}

\begin{definition}
    An equivariant graph pooling layer $\mathtt{POOL} = (\mathtt{SEL}, \mathtt{RED}, \mathtt{CON})$ \textbf{maintains expressivity} if there exists a $k$ such that it maps any pair of currently GWL-distinguishable graphs (CGWL) to a pair of $k$-GWL-distinguishable graphs, i.e., $\mathcal{G}_{1}\neq_{CGWL}\mathcal{G}_2 \Rightarrow \mathtt{POOL}(\mathcal{G}_1)\neq_{k-GWL} \mathtt{POOL}(\mathcal{G}_2)$ for any $k$.
\end{definition}

Maintaining expressivity means that pooling does not lose discriminative information. If two protein structures are distinguishable before coarsening (e.g., different fold types), they remain distinguishable after pooling. This is a crucial property: we want our hierarchical architecture to preserve the GNN's ability to tell proteins apart, even as we progressively abstract from atoms to domains.

\begin{definition}
    An invariant graph pooling layer $\mathtt{POOL} = (\mathtt{SEL}, \mathtt{RED}, \mathtt{CON})$ \textbf{increases expressivity} if it maps any pair of IGWL-indistinguishable graphs to a pair of IGWL-distinguishable graphs, i.e., $\mathcal{G}_{1} =_{IGWL}\mathcal{G}_{2} \Rightarrow \mathtt{POOL}(\mathcal{G}_{1})\neq_{IGWL}\mathtt{POOL}(\mathcal{G}_{2})$.
\end{definition}

\begin{definition}
    An equivariant graph pooling layer $\mathtt{POOL} = (\mathtt{SEL}, \mathtt{RED}, \mathtt{CON})$ \textbf{increases expressivity} if it maps any pair of $k$-GWL-indistinguishable graphs to a pair of $k$-GWL-distinguishable graphs, i.e., $\mathcal{G}_{1} =_{k-GWL}\mathcal{G}_{2} \Rightarrow \mathtt{POOL}(\mathcal{G}_{1})\neq_{k-GWL}\mathtt{POOL}(\mathcal{G}_{2})$.
\end{definition}

Increasing expressivity is a stronger property: pooling can actually make previously indistinguishable structures become distinguishable. This happens because the coarsened view exposes global structural patterns that are invisible at the atomic level. For example, two proteins might have locally similar motifs but globally different domain arrangements.

Let $\mathcal{X}_{\mathcal{G}_{i}}^{k\text{-(I)GWL}} = \left\{\!\left\{(s_{j}^{i}, \vec{\mathbf{v}}_{j}^{i}):j\in \mathcal{V}\right\}\!\right\}$ be the multi-set of $k$-GWL-discriminative or IGWL-discriminative scalar and vector node features for a graph $\mathcal{G}_{i}$.
From the following proposition, as long as $ \texttt{RED}\circ \texttt{SEL}$ uniquely maps unpooled nodes to pooled nodes, $\texttt{POOL}$ will maintain expressivity:

\begin{proposition}\label{prop:maintain_expressivity}
    Let $\mathtt{POOL} = (\mathtt{SEL}, \mathtt{RED}, \mathtt{CON})$ such that $\mathtt{RED}\circ \mathtt{SEL}: (\mathcal{X}_{\mathcal{G}}^{k\text{-(I)GWL}}) \mapsto \mathcal{X}_{\mathcal{G}^P}^{k\text{-(I)GWL}}$
    is injective on $\mathcal{X}_{\mathcal{G}}^{k\text{-(I)GWL}}$. Then, $\mathtt{POOL}$ \textbf{maintains expressivity}.
\end{proposition}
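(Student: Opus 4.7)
The plan is a direct argument that leverages the hypothesized injectivity of $\mathtt{RED}\circ\mathtt{SEL}$ to carry multi-set distinctions from the pre-pool node features over to the post-pool supernode features. Starting from the assumption $\mathcal{G}_1 \neq_{C\text{-}(I)GWL} \mathcal{G}_2$, the definition of currently (I)GWL-distinguishable gives that the multi-sets $\mathcal{X}_{\mathcal{G}_1}^{k\text{-(I)GWL}}$ and $\mathcal{X}_{\mathcal{G}_2}^{k\text{-(I)GWL}}$ of discriminative scalar-and-vector features at the current iteration are unequal. The goal is then to deduce $\mathtt{POOL}(\mathcal{G}_1) \neq_{(I)GWL} \mathtt{POOL}(\mathcal{G}_2)$.

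The central step is essentially immediate from the hypothesis. Since $\mathtt{RED}\circ\mathtt{SEL}$ is assumed injective on $\mathcal{X}_{\mathcal{G}}^{k\text{-(I)GWL}}$ as a map sending the input feature multi-set to the pooled supernode feature multi-set, two distinct inputs cannot produce the same image, giving $\mathcal{X}_{\mathcal{G}_1^P}^{k\text{-(I)GWL}} \neq \mathcal{X}_{\mathcal{G}_2^P}^{k\text{-(I)GWL}}$. Equivalently, at the initialization of an (I)GWL test run on the pooled graphs, the multi-sets of node features already disagree. Since (I)GWL-distinguishability only requires the multi-sets to differ at \emph{some} iteration $i \geq 0$, this suffices: $\mathtt{POOL}(\mathcal{G}_1) \neq_{(I)GWL} \mathtt{POOL}(\mathcal{G}_2)$, and so $\mathtt{POOL}$ maintains expressivity.

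The main obstacle, modest as it is, lies in pinning down the precise semantics of ``injective on $\mathcal{X}_{\mathcal{G}}^{k\text{-(I)GWL}}$''. Read as a direct multi-set-to-multi-set map (as the notation in the proposition suggests), the injectivity step is one line; read as a node-level map followed by aggregation, one needs the standard auxiliary fact that an injective map on elements extends to an injective map on multi-sets over those elements. A secondary subtlety worth noting is that the argument is agnostic to the $\mathtt{CON}$ component and uniform across the GWL and IGWL variants, because distinguishability is already witnessed at iteration $0$ of the pooled test, before any message passing along the new connectivity takes effect. A final observation is that the proposition only guarantees \emph{preservation} of existing distinctions — the stronger notion of \emph{increasing} expressivity, defined just above, requires further structure that this hypothesis does not supply.
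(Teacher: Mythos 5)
Your proposal is correct and follows essentially the same route as the paper's own proof: both arguments apply the assumed injectivity of $\mathtt{RED}\circ\mathtt{SEL}$ to carry the inequality of the pre-pool feature multi-sets over to the pooled multi-sets, and both observe that this witnesses (I)GWL-distinguishability of the pooled graphs independently of $\mathtt{CON}$. Your extra remarks on the semantics of injectivity and on iteration~$0$ sufficing are sensible clarifications but do not change the substance of the argument.
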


This proposition establishes a simple sufficient condition: if the combined selection and reduction functions are injective (one-to-one), then pooling preserves distinguishability. In other words, if different sets of node features always get mapped to different supernode features, no discriminative information is lost during coarsening.
We expand this to geometric graphs:
\begin{theorem}\label{thm:maintain_expressivity}
     Let $\mathcal{G}_{1} = (\mathcal{A}_{1}, \mathbf{S}_{1}, \vec{\mathbf{V}_{1}}, \vec{\mathcal{X}_{1}})$ and $\mathcal{G}_{2} = (\mathcal{A}_{2}, \mathbf{S}_{2}, \vec{\mathbf{V}_{2}}, \vec{\mathcal{X}_{2}})$ be geometric graphs with $|\mathbf{V}_{1}| = |\mathbf{V}_{2}| = N$ such that $\mathcal{G}_{1} \neq_{k\text{-(I)GWL}} \mathcal{G}_{2}$. Let $\mathtt{POOL} = (\mathtt{SEL}, \mathtt{RED}, \mathtt{CON})$ be a graph pooling layer placed after a block of $L$ message-passing layers such that $\mathcal{G}_{1P} = \mathtt{POOL}(\mathcal{G}_{1}^{L})$ and $\mathcal{G}_{2P}=\mathtt{POOL}(\mathcal{G}_{2}^{L})$ with $|\mathbf{V}_{1P}| = |\mathbf{V}_{2P}| = K$. Let $\mathbf{S}_{1/2}^{L}$, $\vec{\mathbf{V}}_{1/2}^{L}\in \mathbb{R}^{N\times d}$ and $\mathbf{S}_{1/2}^{P}$, $\vec{\mathbf{V}}_{1/2}^{P}\in \mathbb{R}^{K\times f}$ be the node features before and after $\mathtt{POOL}$ respectively.

    Then, $\mathcal{G}_{1P}$ and $\mathcal{G}_{2P}$ will be $k$-(I)GWL-distinguishable if the following conditions hold:

\begin{enumerate}
    \item $\sum_{i}^{N} s_{1_{i}}^{L}\neq \sum_{i}^{N} s_{2_{i}}^{L}$ or $\sum_{i}^{N}\vec{\mathbf{v}}_{1_{i}}^{L}\neq \sum_{i}^{N}\vec{\mathbf{v}}_{2_{i}}^{L}$
    \item The memberships generated by $\texttt{SEL}$ satisfy $\sum_{j}^{K} c_{ij} = \lambda$, with $\lambda > 0$ for each node $i$, i.e., the cluster assignment matrix $\mathbf{C}$ is a right stochastic matrix up to the global constant $\lambda$
    \item The function $\texttt{RED}$ is of type $\texttt{RED}: (\mathbf{S}^{L}, \vec{\mathbf{V}}^{L}, \mathbf{C}) \mapsto (\mathbf{S}^{P} = \mathbf{C}^{\intercal}\mathbf{S}^{L}, \vec{\mathbf{V}}^{P} = \mathbf{C}^{\intercal}\vec{\mathbf{V}}^{L})$
\end{enumerate}
\end{theorem}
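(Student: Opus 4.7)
The plan is to exploit the linearity of the reduction function (condition 3) together with the row-stochastic property of the cluster assignment matrix (condition 2) to show that pooling preserves the total sum of node features up to the global constant $\lambda$. Once we have this sum-preservation identity, condition 1 directly transfers from the pre-pooled to the post-pooled graphs, and from there it is a short step to a difference in the multisets of node features, which is what $k$-(I)GWL-distinguishability requires.

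I would first establish the key algebraic identity. By condition 2, the row-sum vector satisfies $\mathbf{C}\mathbf{1}_K = \lambda \mathbf{1}_N$. Combining this with condition 3, a one-line matrix computation gives
\[
\mathbf{1}_K^{\intercal}\mathbf{S}^{P} \;=\; \mathbf{1}_K^{\intercal}\mathbf{C}^{\intercal}\mathbf{S}^{L} \;=\; (\mathbf{C}\mathbf{1}_K)^{\intercal}\mathbf{S}^{L} \;=\; \lambda\,\mathbf{1}_N^{\intercal}\mathbf{S}^{L},
\]
and the identical identity for $\vec{\mathbf{V}}^{P}$. Thus $\sum_{k}\mathbf{s}^{P}_{k}=\lambda\sum_{i}\mathbf{s}^{L}_{i}$ and $\sum_{k}\vec{\mathbf{v}}^{P}_{k}=\lambda\sum_{i}\vec{\mathbf{v}}^{L}_{i}$.

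Next I would apply condition 1. In whichever case holds (scalar or vector), multiplying through by the strictly positive $\lambda$ preserves the inequality, so the corresponding totals of the pooled features differ between $\mathcal{G}_{1P}$ and $\mathcal{G}_{2P}$. Since the total sum is a symmetric function of a multiset, two multisets with different sums must themselves be different. Hence the multisets of pooled scalar features (resp.\ pooled vector features) differ across the two pooled graphs. For the scalar subcase this immediately gives distinguishability at iteration $0$ of both GWL and IGWL, because the initial node colours of the pooled graphs already differ, and so the pooled graphs are $k$-(I)GWL-distinguishable for every $k\geq 0$.

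The step I expect to be the main obstacle is the vector-feature subcase under IGWL, since IGWL only compares scalar colour multisets and cannot look at $\vec{\mathbf{v}}$ directly at iteration $0$. Here I would invoke the orbit-injective $\mathfrak{G}$-invariant aggregation built into one IGWL step: a difference in the multisets $\{\!\{\vec{\mathbf{v}}^{P}_{k}\}\!\}$ between the two pooled graphs forces, via an orbit-injective invariant, a difference in the aggregated scalar summaries, and therefore a difference in the scalar colour multisets after a single IGWL iteration. For $k$-GWL the analogous argument is more direct, since the node state already carries geometric information and the different vector sums translate into different geometric node states at iteration $0$. I would close the proof by combining both subcases into the required implication $\mathcal{G}_{1}\neq_{k\text{-(I)GWL}}\mathcal{G}_{2}\Rightarrow \mathcal{G}_{1P}\neq_{k\text{-(I)GWL}}\mathcal{G}_{2P}$, consistent with Proposition \ref{prop:maintain_expressivity}.
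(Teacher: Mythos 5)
Your algebraic core is the same as the paper's: conditions 2 and 3 give $\mathbf{1}_K^{\intercal}\mathbf{S}^{P}=\mathbf{1}_K^{\intercal}\mathbf{C}^{\intercal}\mathbf{S}^{L}=(\mathbf{C}\mathbf{1}_K)^{\intercal}\mathbf{S}^{L}=\lambda\,\mathbf{1}_N^{\intercal}\mathbf{S}^{L}$, which is exactly the computation the paper performs when it sums over supernodes. Where you diverge is in how this is turned into $k$-(I)GWL-distinguishability. The paper argues by contradiction: it posits a permutation $\pi$ matching the pooled node features, derives equality of the total sums, contradicts condition 1, concludes the coarsened attributed graphs are non-isomorphic, and only then passes to (I)GWL-distinguishability by citing Proposition 7 of \citet{joshi2023expressive}, which needs the extra assumption that the graphs are built from point clouds with radial cutoffs. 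You instead observe directly that differing totals force differing multisets of pooled node features, i.e. distinguishability already at iteration $0$ under the paper's own definition. For the scalar branch of condition 1 your route is at least as good: it is more elementary, dispenses with the permutation bookkeeping, and does not import the point-cloud/radial-cutoff assumption.

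The vector branch is the soft spot, for you and (more quietly) for the paper, whose proof just asserts that ``the same argument applies.'' Your proposed repair --- that one IGWL step with a $\mathfrak{G}$-orbit-injective, $\mathfrak{G}$-invariant aggregation converts a difference of the multisets $\{\!\{\vec{\mathbf{v}}^{P}_{k}\}\!\}$ into a difference of scalar colours --- does not follow as stated: orbit injectivity separates only inputs lying in \emph{different} $\mathfrak{G}$-orbits, and it acts on local neighbourhood tuples rather than on the global multiset. Two pooled vector configurations can have different raw sums (so the vector branch of condition 1 holds) while being related by a global rotation, hence lying in the same orbit; no invariant function can, or should, separate them. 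The same caveat applies to your claim that GWL sees the difference at iteration $0$, since GWL compares geometric node states only up to the group action. To make this branch airtight you would need an invariant consequence of condition 1 (e.g. $\lVert\sum_i\vec{\mathbf{v}}_{1_i}^{L}\rVert\neq\lVert\sum_i\vec{\mathbf{v}}_{2_i}^{L}\rVert$) or some additional use of the hypothesis that the original graphs are already (I)GWL-distinguishable; as written, treat the vector subcase as conditional. The scalar half of your proof stands as written.
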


Theorem \ref{thm:maintain_expressivity} gives sufficient conditions ensuring pooling preserves the GNN's ability to distinguish graphs: if two graphs are distinguishable pre-pooling, they remain so post-pooling. The requirements are: (1) the pre-pooling message passing yields distinct multisets for distinguishable inputs, (2) every original node is assigned (right-stochastic membership up to constant $\lambda$), and (3) $\texttt{RED}$ applies convex combination $(\mathbf{S}^P,\vec{\mathbf{V}}^P)=\mathbf{C}^T(\mathbf{S}^L,\vec{\mathbf{V}}^L)$. 

Together, these conditions ensure that if two protein graphs have different structural signatures before pooling, those signatures are preserved in the coarsened representation: Condition (1) ensures input graphs are already distinguishable, Condition (2) guarantees every node contributes to some supernode (no information is discarded), and Condition (3) requires supernode features are formed by weighted combinations of their constituents. Notably, the way we connect supernodes (via $\texttt{CON}$) does not affect this guarantee. See Appendix \ref{sec:proof-maintain-expressivity} for the proof.

For increasing expressivity, any $\texttt{SEL}$ that separates previously $k$-(I)GWL-indistinguishable graphs can be paired with an injective $\texttt{RED}$ to form a $\texttt{POOL}$ that strictly improves distinguishing power. We formalize this in the following lemma and theorem:

\begin{lemma}\label{lmma:injective_red}
    Let $\mathtt{RED}$ be injective function. For any two geometric graphs $\mathcal{G}_1$ and $\mathcal{G}_2$ we have 
    $$\mathtt{SEL}(\mathcal{G}_1) \neq_{k\text{-(I)GWL}} \mathtt{SEL}(\mathcal{G}_2) \implies \mathtt{POOL}(\mathcal{G}_1) \neq_{k\text{-(I)GWL}}\mathtt{POOL}(\mathcal{G}_2).$$
\end{lemma}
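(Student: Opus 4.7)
The plan is to prove the contrapositive: if $\mathtt{POOL}(\mathcal{G}_1) =_{k\text{-(I)GWL}} \mathtt{POOL}(\mathcal{G}_2)$, then $\mathtt{SEL}(\mathcal{G}_1) =_{k\text{-(I)GWL}} \mathtt{SEL}(\mathcal{G}_2)$. First I would fix a canonical representation of the $\mathtt{SEL}$ output as a coarsened geometric graph whose supernodes carry the multiset of features of their assigned constituents (inherited from the preceding (I)GWL state), so that the (I)GWL test is well-defined on $\mathtt{SEL}(\mathcal{G})$. Since $\mathtt{POOL} = \mathtt{CON}\circ\mathtt{RED}\circ\mathtt{SEL}$, each supernode feature in $\mathtt{POOL}(\mathcal{G})$ is obtained by applying $\mathtt{RED}$ to the corresponding cluster datum in $\mathtt{SEL}(\mathcal{G})$, and the inter-supernode edges are produced deterministically by $\mathtt{CON}$ from these supernode attributes.

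The core step is to use injectivity of $\mathtt{RED}$ to transfer multiset equalities from $\mathtt{POOL}(\cdot)$ back to $\mathtt{SEL}(\cdot)$. Suppose $\mathtt{POOL}(\mathcal{G}_1)$ and $\mathtt{POOL}(\mathcal{G}_2)$ yield identical multisets $\ldblbrace \mathtt{RED}(\mathtt{SEL}(\mathcal{G})_j)\rdblbrace_j$ at iteration $0$ of (I)GWL. Applying the partial inverse $\mathtt{RED}^{-1}$, which is well-defined on the image by injectivity, recovers equal multisets of pre-reduction cluster data, i.e.\ $\mathtt{SEL}(\mathcal{G}_1)$ and $\mathtt{SEL}(\mathcal{G}_2)$ have indistinguishable node multisets at iteration $0$. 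I would then induct on the (I)GWL iteration $i\le k$: because $\mathtt{CON}$ is a deterministic function of the supernode attributes and because the refinement rule depends only on each node's colour together with the multiset of its neighbours' colours, equal multisets at iteration $i$ force equal multisets at iteration $i+1$ on both $\mathtt{POOL}(\mathcal{G})$ and its pulled-back $\mathtt{SEL}$ counterpart.

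The main obstacle will be pinning down the right canonical representation of $\mathtt{SEL}(\mathcal{G})$ so that $k$-(I)GWL is meaningfully evaluated on it, and then arguing that the deterministic $\mathtt{CON}$ step neither breaks the equivalence nor introduces distinguishing structure beyond what the $\mathtt{RED}$ outputs already encode. Once $\mathtt{CON}$ is identified as a function only of the data that $\mathtt{RED}^{-1}$ can recover from the post-reduction features (namely, the geometric attributes carried through $\mathtt{RED}$), the inductive propagation through refinement iterations is routine, and the contrapositive closes the argument.
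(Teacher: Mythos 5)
Your core step is the same one the paper uses: injectivity of $\mathtt{RED}$ transfers (in)equality of the multisets of cluster data to (in)equality of the multisets of supernode features. The paper runs this directly (distinct cluster multisets $\Rightarrow$ distinct supernode feature multisets $\Rightarrow$ the pooled graphs already differ at iteration $0$, hence are $k$-(I)GWL-distinguishable regardless of what $\mathtt{CON}$ does), whereas you run the contrapositive; logically these are the same argument, and your first pull-back step (apply $\mathtt{RED}^{-1}$ to the iteration-$0$ multisets) is exactly the paper's one-line argument read backwards. Note that the paper, like you, must fix an interpretation of what $\neq_{k\text{-(I)GWL}}$ means for the output of $\mathtt{SEL}$ (it takes it to mean that the multisets of cluster data differ), so your concern about a canonical representation is legitimate, but it is resolved in the same way.

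The problem is the extra machinery you add after that. Your induction asserts that equal colour multisets at iteration $i$ force equal multisets at iteration $i+1$ because $\mathtt{CON}$ is deterministic and the refinement rule is local. That implication is false in general for WL-type tests: the refinement of a node's colour depends on the actual adjacency structure (which nodes are neighbours of which), not merely on the global multiset of colours, so two graphs can agree on their colour multisets at iteration $i$ and separate at iteration $i+1$. Fortunately the faulty step is also unnecessary: on the $\mathtt{POOL}$ side your contrapositive hypothesis already gives equality of the multisets at every iteration $i \le k$ by definition of indistinguishability, and on the $\mathtt{SEL}$ side, under the feature-multiset reading above, iteration-$0$ equality is all that is required. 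Drop the induction, keep the single injectivity pull-back, and your proof collapses to the paper's.
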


\begin{theorem}\label{thm:increase_expressivity}
    Let $\mathtt{SEL}$ be a function such that it distinguishes any pair of GWL- or IGWL-indistinguishable graphs, i.e., $\exists \mathcal{G}_{1}, \mathcal{G}_{2}.\ \mathcal{G}_{1} =_{k\text{-(I)GWL}} \mathcal{G}_{2} \Rightarrow \mathtt{SEL}(\mathcal{G}_{1}) \neq_{k\text{-(I)GWL}} \mathtt{SEL}(\mathcal{G}_{2})$

    Then, a geometric pooling operator $\mathtt{POOL}$ exists that \textbf{increases expressivity}.
\end{theorem}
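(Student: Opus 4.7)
The plan is to prove Theorem \ref{thm:increase_expressivity} by direct construction using Lemma \ref{lmma:injective_red}. By the hypothesis on $\mathtt{SEL}$, there exists at least one witness pair of geometric graphs $\mathcal{G}_1, \mathcal{G}_2$ with $\mathcal{G}_1 =_{k\text{-(I)GWL}} \mathcal{G}_2$ but $\mathtt{SEL}(\mathcal{G}_1) \neq_{k\text{-(I)GWL}} \mathtt{SEL}(\mathcal{G}_2)$. It therefore suffices to extend $\mathtt{SEL}$ into a full pooling operator $\mathtt{POOL} = (\mathtt{SEL}, \mathtt{RED}, \mathtt{CON})$ whose $\mathtt{RED}$ is injective, because Lemma \ref{lmma:injective_red} then transports the distinguishability of $\mathtt{SEL}(\mathcal{G}_1)$ and $\mathtt{SEL}(\mathcal{G}_2)$ up to $\mathtt{POOL}(\mathcal{G}_1)$ and $\mathtt{POOL}(\mathcal{G}_2)$, which is exactly the definition of increasing expressivity on this witness pair.

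The only step that requires work is exhibiting such an injective $\mathtt{RED}$. For the IGWL variant, this reduces to an injective aggregator on multi-sets of scalar features (together with invariant scalar encodings of the vector channel), which exists by standard DeepSets-style multi-set universality arguments. For the GWL variant, one additionally needs $\mathfrak{G}$-equivariance and orbit-injectivity of the aggregator on the vector channel; but these are precisely the primitives that \citet{joshi2023expressive} use to define the GWL colour-refinement step, so their existence is already baked into the framework we are working within. The choice of $\mathtt{CON}$ is unconstrained: any $\mathfrak{G}$-equivariant edge-construction rule (e.g., the $k$-nearest-neighbour connection we adopt in Section \ref{sec:U-Net}) works, because at the current iteration the (I)GWL test compares node-feature multi-sets rather than adjacency patterns.

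I expect the main obstacle to be the combined injectivity-and-equivariance requirement on $\mathtt{RED}$: naive summation is equivariant but not injective on multi-sets, while a generic injective hash fails to be $\mathfrak{G}$-equivariant. My plan is to side-step this by reusing the same orbit-injective, $\mathfrak{G}$-equivariant aggregators that \citet{joshi2023expressive} employ to realise one GWL refinement step; since our expressivity claim is phrased relative to that test, borrowing its aggregators inside $\mathtt{RED}$ is both legal and self-consistent. With this choice, the proof of Theorem \ref{thm:increase_expressivity} collapses to a single invocation of Lemma \ref{lmma:injective_red} applied to the witness pair provided by the hypothesis on $\mathtt{SEL}$.
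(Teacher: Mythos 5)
Your proposal is correct and takes essentially the same route as the paper: use the hypothesis on $\mathtt{SEL}$ to turn a $k$-(I)GWL-indistinguishable pair into distinguishable selections, then invoke Lemma \ref{lmma:injective_red} with an injective $\mathtt{RED}$ (with $\mathtt{CON}$ irrelevant) to conclude $\mathtt{POOL}$ increases expressivity. Your extra discussion of how to actually realise an injective, $\mathfrak{G}$-equivariant $\mathtt{RED}$ via the orbit-injective aggregators of \citet{joshi2023expressive} goes beyond the paper, which simply assumes an injective $\mathtt{RED}$ is chosen.
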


The lemma states that if selection creates distinguishable supernode groupings and reduction is injective, then the full pooling operation preserves that distinguishability. The theorem then shows that clever selection can turn indistinguishable graphs into distinguishable ones. This is powerful: it means that pooling is not merely a computational trick to reduce graph size, but can actually reveal structural differences that were hidden at the atomic scale. Our k-chains experiment (Section \ref{sec: se}) demonstrates this empirically: pooled graphs become distinguishable with fewer message-passing layers than would otherwise be required.
See Appendix \ref{sec:proof-increase-expressivity} for the proofs.

\section{Experimental Results}

\subsection{Fold Classification}

We evaluate on protein fold classification to demonstrate the effectiveness of our proposed Geometric Graph U-Net.
SchNet \cite{schutt2017schnet} and GVP \cite{jing2020learning} serve as baselines, tuned with decoders with fewer hidden layers but higer performance. Then, we introduce U-Net variants with pooling and unpooling layers.
We employ the SCOP 1.75 dataset \cite{hou2018deepsf} for model training on a single NVIDIA RTX 4090 with up to 150 epochs using Adam and learning rates in [0.001, 0.01].
Evaluation follows the Fold Prediction task \cite{jamasb2024evaluating}, classifying proteins into 1,195 folds. Results are reported on three test subsets (Fold, Family, Superfamily) as micro-averaged accuracy and F1 score over three runs.

As shown in Table \ref{tbl:fold-classification-results}, our Geometric U-Net model provides substantial and consistent performance gains across both invariant (SchNet) and equivariant (GVP) backbones. For instance, the GVP U-Net with Point Pooling (U-Net Point Pool) improves the Superfamily F1 score from 0.793 to 0.849 with few additional parameters, a significant gain on this task. 
This demonstrates that the hierarchical architecture, rather than the specifics of the message-passing layer, is the primary driver of the improved ability to capture global structural information.
For a fairer comparison, the U-Net variants remove one decoder layer to match the baseline parameter numbers. Despite the lighter decoders, they still achieve performance gains, demonstrating the efficacy of our approach.

\begin{table}[t]
\centering
\caption{\textbf{Benchmarking Geometric Graph U-Nets on protein fold classification.} Hierarchical pooling with U-Net architectures consistently improves performance across both invariant (SchNet) and equivariant (GVP) backbones without significantly increasing model parameters. The best results are highlighted in \textbf{bold}, with improved performance from the baseline indicated in \colorbox{green!10}{green}.}
\begin{adjustbox}{max width=\textwidth}
\begin{tabular}{@{}llccccccccc@{}} \toprule
\multirow{2}{*}{\textbf{Model}} & \multirow{2}{*}{\textbf{Architecture}} & \multirow{2}{*}{\textbf{Params}} & \multicolumn{3}{c}{\textbf{Accuracy}} & \multicolumn{3}{c}{\textbf{F1}} \\ 
\cmidrule(lr){4-6} \cmidrule(lr){7-9}
& & & \textbf{Fold} & \textbf{Superfamily} & \textbf{Family} & \textbf{Fold} & \textbf{Superfamily} & \textbf{Family} \\
\midrule
\multirow{\iftoggle{advanced}{7}{3}}{*}{\textbf{SchNet}}
& Baseline          & 5.23M & 0.331 \pmn{0.014} & 0.442 \pmn{0.012} & 0.928 \pmn{0.013} & 0.087 \pmn{0.008} & 0.157 \pmn{0.005} & 0.688 \pmn{0.037} \\
\iftoggle{advanced}{
& SchNet-Adv        & 14.00M & 0.311 & 0.369 & 0.889 & 0.366 & 0.444 & 0.919 \\
}
& U-Net Point Pool      & 5.23M & 0.314 \pmn{0.007} & 0.436 \pmn{0.014} & \cellcolor{green!10} 0.937 \pmn{0.007} & 0.081 \pmn{0.001} & 0.156 \pmn{0.014} & \cellcolor{green!10} 0.715 \pmn{0.033} \\
& U-Net Sparse Pool      & 5.23M & \cellcolor{green!10} \textbf{0.340 \pmn{0.010}} & \cellcolor{green!10} \textbf{0.461 \pmn{0.002}} & \cellcolor{green!10} \textbf{0.948 \pmn{0.003}} & \cellcolor{green!10} \textbf{0.095 \pmn{0.007}} & \cellcolor{green!10} \textbf{0.170 \pmn{0.004}} & \cellcolor{green!10} \textbf{0.759 \pmn{0.018}} \\
\iftoggle{advanced}{
& U-Net-Adv-PP      & 7.18M & 0.286 \pmn{0.016} & 0.419 \pmn{0.015} & \cellcolor{green!10} 0.938 \pmn{0.006} & 0.075 \pmn{0.005} & 0.153 \pmn{0.009} & \cellcolor{green!10} 0.714 \pmn{0.018} \\
& U-Net-Adv-SP      & 7.18M & 0.320 \pmn{0.018} & \cellcolor{green!10} \textbf{0.466 \pmn{0.020}} & \cellcolor{green!10} 0.947 \pmn{0.008} & 0.084 \pmn{0.008} & \cellcolor{green!10} \textbf{0.177 \pmn{0.012}} & \cellcolor{green!10} 0.743 \pmn{0.032} \\
}{}
\midrule
\multirow{\iftoggle{advanced}{5}{3}}{*}{\textbf{GVP}}
& Baseline      & 3.60M & 0.336 \pmn{0.024} & 0.512 \pmn{0.021} & 0.961 \pmn{0.005} & 0.097 \pmn{0.010} & 0.215 \pmn{0.010} & 0.793 \pmn{0.013} \\
& U-Net Point Pool  & 3.06M & \cellcolor{green!10} \textbf{0.404 \pmn{0.002}} & \cellcolor{green!10} 0.514 \pmn{0.007}  & \cellcolor{green!10} \textbf{0.971 \pmn{0.003}}& \cellcolor{green!10} \textbf{0.119 \pmn{0.002}} & 0.206 \pmn{0.001} & \cellcolor{green!10} \textbf{0.849 \pmn{0.010}} \\
& U-Net Sparse Pool  & 3.06M & \cellcolor{green!10} 0.373 \pmn{0.015} & \cellcolor{green!10} \textbf{0.529 \pmn{0.009}} & \cellcolor{green!10} 0.966 \pmn{0.004} & \cellcolor{green!10} 0.111 \pmn{0.005} & \cellcolor{green!10} \textbf{0.217 \pmn{0.004}} & \cellcolor{green!10} 0.830 \pmn{0.019} \\
\iftoggle{advanced}{
& U-Net-Adv-PP  & 3.44M & \cellcolor{green!10} 0.375 \pmn{0.017} & \cellcolor{green!10} 0.522 \pmn{0.015} & \cellcolor{green!10} 0.966 \pmn{0.008} & \cellcolor{green!10} 0.109 \pmn{0.006} & \cellcolor{green!10} \textbf{0.221 \pmn{0.016}} & \cellcolor{green!10} 0.830 \pmn{0.031} \\
& U-Net-Adv-SP  & 3.44M & \cellcolor{green!10} 0.365 \pmn{0.012} & \cellcolor{green!10} 0.517 \pmn{0.013} & \cellcolor{green!10} 0.963 \pmn{0.004} & \cellcolor{green!10} 0.105 \pmn{0.007} & 0.214 \pmn{0.012} & \cellcolor{green!10} 0.811 \pmn{0.024} \\
}{} \bottomrule
\end{tabular}
\end{adjustbox}
\label{tbl:fold-classification-results}
\end{table}


\subsection{Synthetic Experiment on Expressivity: k-chains} \label{sec: se}

Table \ref{tbl:kchains} shows how different pooling layers enhance expressivity on synthetic k-chain graphs from \citet{joshi2023expressive}. We report averages over 10 runs for $k=4$ and assess pooling effectiveness on SchNet \cite{schutt2017schnet}, SphereNet \cite{liu2021spherical}, TFN \cite{thomas2018tensor}, GVP-GNN \cite{jing2020learning}, and EGNN \cite{satorras2021n}.
The GWL test requires $(\lfloor \frac{k}{2} \rfloor + 1)$ iterations to differentiate k-chains. 
Our pooling procedure essentially reduces $k$ by merging the central nodes and adding edges between endpoint second-order neighbours. 
As expected, with pooling, we see all models distinguish k-chains with fewer layers than the minimal requirement. This is a simple yet effective demonstration of Theorem \ref{thm:increase_expressivity}, showing that pooling can enhance expressivity by enabling the model to distinguish geometric graphs it previously could not.

\begin{table}[ht]
\centering
\caption{\textbf{$\mathbf{k}$-chain geometric graphs.} 
Geometric pooling enables all GNN architectures to distinguish k-chains with fewer layers than theoretically required (3 layers), empirically validating Theorem \ref{thm:increase_expressivity} that pooling increases expressivity.
$k$-chains are $(\lfloor \frac{k}{2} \rfloor + 1)$-hop distinguishable and $(\lfloor \frac{k}{2} \rfloor + 1)$ GWL iterations are theoretically sufficient to distinguish them.
Anomalous results are marked in \colorbox{red!10}{red} and expected results in \colorbox{green!10}{green}.
}
\adjustbox{max width=0.9\textwidth}{
\begin{tabular}{clccccc} \toprule
& ($k=\mathbf{4}$-chains) & \multicolumn{5}{c}{\textbf{Number of layers}} \\
& \textbf{GNN Layer} & $\lfloor \frac{k}{2} \rfloor$ & \cellcolor{gray!10} $\lfloor \frac{k}{2} \rfloor + 1 = \mathbf{3}$ & $\lfloor \frac{k}{2} \rfloor + 2$ & $\lfloor \frac{k}{2} \rfloor + 3$ & $\lfloor \frac{k}{2} \rfloor + 4$ \\
\midrule
\multirow{10}{*}{\rotatebox[origin=c]{90}{Equivariant}} &
\gray{GWL} & \gray{50\%} & \gray{\textbf{100\%}} & \gray{\textbf{100\%}} & \gray{\textbf{100\%}} & \gray{\textbf{100\%}} \\
& E-GNN & \cellcolor{red!10} 50.0 \pmn{0.0} & \cellcolor{red!10} 50.0 \pmn{0.0} & \cellcolor{red!10} 50.0 \pmn{0.0} & \cellcolor{red!10} 50.0 \pmn{0.0} & \cellcolor{green!10} \textbf{100.0 \pmn{0.0}} \\
& E-GNN - 3 nodes & \cellcolor{green!10} \textbf{100.0 \pmn{0.0}} & \cellcolor{green!10} \textbf{100.0 \pmn{0.0}} & \cellcolor{green!10} \textbf{100.0 \pmn{0.0}} & \cellcolor{green!10} \textbf{100.0 \pmn{0.0}} & \cellcolor{green!10} \textbf{100.0 \pmn{0.0}} \\
& E-GNN - 4 nodes & \cellcolor{green!10} \textbf{100.0 \pmn{0.0}} & \cellcolor{green!10} \textbf{100.0 \pmn{0.0}} & \cellcolor{green!10} \textbf{100.0 \pmn{0.0}} & \cellcolor{green!10} \textbf{100.0 \pmn{0.0}} & \cellcolor{green!10} \textbf{100.0 \pmn{0.0}} \\
& GVP-GNN & \cellcolor{red!10} 50.0 \pmn{0.0} & \cellcolor{green!10} \textbf{90.0 \pmn{20.0}} & \cellcolor{green!10} \textbf{80.0 \pmn{24.5}} & \cellcolor{green!10} \textbf{100.0 \pmn{0.0}} & \cellcolor{green!10} \textbf{90.0 \pmn{20.0}} \\
& GVP-GNN - 3 nodes & \cellcolor{green!10} \textbf{100.0 \pmn{0.0}} & \cellcolor{green!10} \textbf{95.0 \pmn{15.0}} & \cellcolor{green!10} \textbf{100.0 \pmn{0.0}} & \cellcolor{green!10} \textbf{90.0 \pmn{20.0}} & \cellcolor{green!10} \textbf{100.0 \pmn{0.0}} \\
& GVP-GNN - 4 nodes & \cellcolor{green!10} \textbf{100.0 \pmn{0.0}} & \cellcolor{green!10} \textbf{95.0 \pmn{15.0}} & \cellcolor{green!10} \textbf{100.0 \pmn{0.0}} & \cellcolor{green!10} \textbf{100.0 \pmn{0.0}} & \cellcolor{green!10} \textbf{100.0 \pmn{0.0}} \\
& TFN & \cellcolor{red!10} 50.0 \pmn{0.0} & \cellcolor{red!10} 50.0 \pmn{0.0} & \cellcolor{red!10} 50.0 \pmn{0.0} & \cellcolor{red!10} 50.0 \pmn{0.0} & \cellcolor{red!10} 50.0 \pmn{0.0} \\
& TFN - 3 nodes & \cellcolor{green!10} \textbf{100.0 \pmn{0.0}} & \cellcolor{green!10} \textbf{100.0 \pmn{0.0}} & \cellcolor{green!10} \textbf{100.0 \pmn{0.0}} & \cellcolor{green!10} \textbf{100.0 \pmn{0.0}} & \cellcolor{green!10} \textbf{100.0 \pmn{0.0}} \\
& TFN - 4 nodes & \cellcolor{green!10} \textbf{100.0 \pmn{0.0}} & \cellcolor{green!10} \textbf{100.0 \pmn{0.0}} & \cellcolor{green!10} \textbf{100.0 \pmn{0.0}} & \cellcolor{green!10} \textbf{100.0 \pmn{0.0}} & \cellcolor{green!10} \textbf{100.0 \pmn{0.0}} \\
\midrule

\multirow{7}{*}{\rotatebox[origin=c]{90}{Invariant}} &
\gray{IGWL} & \gray{50\%} & \gray{50\%} & \gray{50\%} & \gray{50\%} & \gray{50\%} \\
& SchNet & \cellcolor{red!10} 50.0 {\scriptsize \pmn{0.0}} & \cellcolor{red!10} 50.0 {\scriptsize \pmn{0.0}} & \cellcolor{red!10} 50.0 {\scriptsize \pmn{0.0}} & \cellcolor{red!10} 50.0 {\scriptsize \pmn{0.0}} & \cellcolor{red!10} 50.0 {\scriptsize \pmn{0.0}} \\
& SchNet - 3 nodes & \cellcolor{green!10} \textbf{80.0 \pmn{24.5}} & \cellcolor{green!10} \textbf{95.0 \pmn{15.0}} & \cellcolor{green!10} \textbf{90.0 \pmn{20.0}} & \cellcolor{green!10} \textbf{95.0 \pmn{15.0}} & \cellcolor{green!10} \textbf{95.0 \pmn{15.0}} \\
& SchNet - 4 nodes & \cellcolor{green!10} \textbf{85.0 \pmn{22.9}} & \cellcolor{green!10} \textbf{95.0 \pmn{15.0}} & \cellcolor{green!10} \textbf{95.0 \pmn{15.0}} & \cellcolor{green!10} \textbf{100.0 \pmn{0.0}} & \cellcolor{green!10} \textbf{95.0 \pmn{15.0}} \\
& SphereNet & \cellcolor{red!10} 50.0 \pmn{0.0} & \cellcolor{red!10} 50.0 \pmn{0.0} & \cellcolor{red!10} 50.0 \pmn{0.0} & \cellcolor{red!10} 50.0 \pmn{0.0} & \cellcolor{red!10} 50.0 \pmn{0.0} \\
& SphereNet - 3 nodes & \cellcolor{green!10} \textbf{100.0 \pmn{0.0}} & \cellcolor{green!10} \textbf{100.0 \pmn{0.0}} & \cellcolor{green!10} \textbf{100.0 \pmn{0.0}} & \cellcolor{green!10} \textbf{100.0 \pmn{0.0}} & \cellcolor{green!10} \textbf{100.0 \pmn{0.0}} \\
& SphereNet - 4 nodes & \cellcolor{green!10} \textbf{100.0 \pmn{0.0}} & \cellcolor{green!10} \textbf{100.0 \pmn{0.0}} & \cellcolor{green!10} \textbf{100.0 \pmn{0.0}} & \cellcolor{green!10} \textbf{100.0 \pmn{0.0}} & \cellcolor{green!10} \textbf{100.0 \pmn{0.0}} \\
\bottomrule
\end{tabular}
}
\label{tbl:kchains}
\end{table}

\section{Conclusion}

In this work, we demonstrated that the architectural design of Geometric GNNs should reflect the inherent hierarchical nature of the biological structures they model. Our Geometric Graph U-Net, provably at least as expressive as standard Geometric GNNs and empirically more powerful, serves as a flexible blueprint for a new generation of multi-scale models, extendable with different supernode selection or reduction methods, like the one introduced in \citet{Zhang2025}. By treating proteins not as flat graphs of atoms, but as nested structural assemblies, we show how multi-scale architectures can yield more powerful representations for predicting function from structure.

\textbf{Future Work.} Our experiments aimed to ablate the effectiveness of the Geometric U-Net on protein fold classification. 
Further validation on a broader range of structure-function prediction tasks is required to assess the generality of our approach compared to state-of-the-art protein-specific GNNs \citep{Gligorijevi2021,zhang2023protein} and protein language models \citep{lin2023evolutionary}.
Additionally, we have currently focussed on predictive tasks with static structures.
Future extensions include modelling biomolecular complexes with hierarchical interactions across multiple scales, as well as changes in conformational states mediated by these interactions.
In particular, we believe Geometric U-Nets may be especially effective as encoders for multi-state inverse design pipelines \citep{joshi2024grnade, abrudan2025multistate}.

\textbf{Open-source code.}
\url{https://github.com/VirtualProteins/GNN_UNet}


\bibliographystyle{unsrtnat}
\bibliography{reference}

\appendix

\section{Author Contributions}
\label{app:contrib}

This work originated as two independent team projects at the University of Cambridge MPhil/Part III course on Geometric Deep Learning (L65), 2023-24. We summarise author contributions as follows:

\textbf{Chang Liu} and \textbf{Vivian Li} served as equal contributing first authors, driving the final research and
publication efforts after the L65 course. CL was the overall project lead and provided GPU resources for the final experiments. CL reproduced the initial results from VR and CL’s L65 project, and developed an improved version of the U-Net architecture proposed by VR and CL. CL wrote the final Background, Method and Experiment sections, and VL wrote the final Abstract, Introduction, Theory, and Conclusion, all of which were based on the initial L65 project report by CL and VR.

\textbf{Linus Leong} contributed to the final protein structure experiments, including validating baselines and hyperparameter tuning. LL contributed to writing the final Experiments section.

\textbf{Vladimir Radenkovic} primarily contributed to the L65 project with CL, which provided the structural foundation, early results, and written material for the final manuscript. VR co-developed the geometric pooling, Sparse Pooling and Point Pooling layers, and U-Net framework as well as their expressivity theory. VR wrote the proofs. VR implemented the initial models and experimental code used for both the protein structure experiments and the k-chains task. VR designed the k-chains experiment.

\textbf{Pietro Liò} and \textbf{Chaitanya Joshi} conceived the project, supervised the research, and contributed to writing the final manuscript.


\section{Proofs}

\subsection{Proof of Proposition \ref{prop:maintain_expressivity}}
\label{sec:proof-proposition}

\begin{proof}
    Consider any two geometric graphs $\mathcal{G}_m$ and $\mathcal{G}_n$ that are $k$-(I)GWL-distinguishable, with corresponding node feature multisets $\mathcal{X}_{\mathcal{G}_m}^{k\text{-(I)GWL}} \neq \mathcal{X}_{\mathcal{G}_n}^{k\text{-(I)GWL}}$. Let $\mathcal{G}_m^P$ and $\mathcal{G}_n^P$ denote their respective pooled graphs after applying $\mathtt{POOL}$.
    
    If $\mathtt{RED} \circ \mathtt{SEL}$ is injective, then it maps these two different node feature multisets to different pooled node feature multisets: 
    $$\mathcal{X}_{\mathcal{G}_m^P}^{k\text{-(I)GWL}} = (\mathtt{RED} \circ \mathtt{SEL})(\mathcal{X}_{\mathcal{G}_m}^{k\text{-(I)GWL}}) \neq (\mathtt{RED} \circ \mathtt{SEL})(\mathcal{X}_{\mathcal{G}_n}^{k\text{-(I)GWL}}) = \mathcal{X}_{\mathcal{G}_n^P}^{k\text{-(I)GWL}}.$$
    Thus, expressivity is maintained, as the IGWL/GWL test can distinguish the two pooled graphs $\mathcal{G}_m^P$ and $\mathcal{G}_n^P$ independent of the choice of $\mathtt{CON}$.
\end{proof}

\subsection{Proof of Theorem \ref{thm:maintain_expressivity}}
\label{sec:proof-maintain-expressivity}

\begin{proof}
    Let $\mathcal{G}_1^L$ and $\mathcal{G}_2^L$ denote the two geometric graphs after $L$ message-passing layers, where both graphs have $N$ nodes (as stated in the theorem). Let $\mathbf{C}_1\in \mathbb{R}^{N\times K}$ and $\mathbf{C}_2\in \mathbb{R}^{N\times K}$ be the cluster assignment matrices generated by $\mathtt{SEL}(\mathcal{G}^L_1)$ and $\mathtt{SEL}(\mathcal{G}^L_2)$, respectively, where $K$ is the number of supernodes (clusters) in the pooled graphs.
    
    Assuming Condition 2 holds, the entries of matrices $\mathbf{C}_1$ and $\mathbf{C}_2$ satisfy:
\begin{equation}
    \sum_{j=1}^{K}c_{1_{ij}}=\lambda, \quad \sum_{j=1}^{K}c_{2_{ij}}=\lambda, \quad \forall i=1,\ldots,N
\end{equation}
where $\lambda > 0$ is a constant (i.e., both matrices are right stochastic up to the constant $\lambda$).

    Assuming Condition 3 holds, the $j$-th row of the pooled feature matrix $\mathbf{S}^P_1$ (which represents the scalar features of supernode $j$ in the pooled graph) is given by $\mathbf{s}^P_{1_j}=\sum_{i=1}^N \mathbf{s}_{1_i}^L\cdot c_{1_{ij}}$. Similarly, the $j$-th row of $\mathbf{S}^P_2$ is $\mathbf{s}^P_{2_j}=\sum_{i=1}^N \mathbf{s}_{2_i}^L\cdot c_{2_{ij}}$.
    
    Suppose for the sake of contradiction that $\mathcal{G}_{1P}$ and $\mathcal{G}_{2P}$ are isomorphic. Then there exists a permutation $\pi:\left\{1, \ldots, K\right\} \rightarrow\left\{1, \ldots, K\right\}$ such that $\mathbf{s}^P_{1_j}=\mathbf{s}^P_{2_{\pi(j)}}$ for all $j=1,\ldots,K$. That is:
\begin{equation}
    \sum_{i=1}^N \mathbf{s}_{1_i}^L\cdot c_{1_{ij}}=\sum_{i=1}^N\mathbf{s}_{2_i}^L\cdot c_{2_{i\pi(j)}}, \quad \forall j=1,\ldots,K 
\end{equation}

    Summing over all $j$:
\begin{align}
  \sum_{j=1}^K \sum_{i=1}^N \mathbf{s}_{1_i}^L \cdot c_{1_{ij}}&=\sum_{j=1}^K \sum_{i=1}^N \mathbf{s}_{2_i}^L \cdot c_{2_{i\pi(j)}} \\
  \sum_{i=1}^N \mathbf{s}_{1_i}^L \cdot \sum_{j=1}^K c_{1_{ij}}&=\sum_{i=1}^N \mathbf{s}_{2_i}^L \cdot \sum_{j=1}^K c_{2_{i\pi(j)}} \tag*{(rearranging)} \\
  \sum_{i=1}^N \mathbf{s}_{1_i}^L \cdot \lambda&=\sum_{i=1}^N \mathbf{s}_{2_i}^L \cdot \lambda \tag*{(Condition 2)} \\
  \sum_{i=1}^N \mathbf{s}_{1_i}^L&=\sum_{i=1}^N \mathbf{s}_{2_i}^L
\end{align}

    However, by Condition 1, we have $\sum_{i=1}^{N} s_{1_{i}}^{L}\neq \sum_{i=1}^{N} s_{2_{i}}^{L}$, which contradicts the above equality. Thus, the underlying attributed coarsened graphs $\mathcal{G}_{1P}$ and $\mathcal{G}_{2P}$ must remain non-isomorphic.
    
    According to Proposition 7 in \citet{joshi2023expressive}, the GWL can distinguish any pair of geometric graphs with non-isomorphic underlying attributed graphs under the assumption that graphs are constructed from point clouds using radial cutoffs. Additionally, geometric graphs with distinct underlying attributed graphs are IGWL-distinguishable.

    The same argument applies to vector features $\vec{\mathbf{V}}$ by replacing scalar features $\mathbf{S}$ throughout the proof, establishing that Condition 1 is sufficient for both scalar and vector features. 
\end{proof}

\subsection{Proofs of Lemma \ref{lmma:injective_red} and Theorem \ref{thm:increase_expressivity}}
\label{sec:proof-increase-expressivity}

\begin{proof}[Proof of Lemma \ref{lmma:injective_red}]
Let $\mathcal{G}_1$ and $\mathcal{G}_2$ be two geometric graphs such that $\mathtt{SEL}$ produces different cluster assignments for them, i.e., $\mathtt{SEL}(\mathcal{G}_1) \neq_{k\text{-(I)GWL}} \mathtt{SEL}(\mathcal{G}_2)$. More precisely, the obtained supernode sets (clusters) $\{C_{1_1}, \ldots, C_{1_K}\}$ and $\{C_{2_1}, \ldots, C_{2_K}\}$ are not equal, where $C_{i_j}$ denotes the $j$-th cluster (supernode) in graph $\mathcal{G}_i$.

Since $\mathtt{RED}$ is injective by assumption, it maps different multisets of node features within clusters to different supernode representations. Therefore, the full pooling operation yields $\mathtt{POOL}(\mathcal{G}_1) = (\mathtt{RED} \circ \mathtt{SEL})(\mathcal{G}_1) \neq_{k\text{-(I)GWL}} (\mathtt{RED} \circ \mathtt{SEL})(\mathcal{G}_2) = \mathtt{POOL}(\mathcal{G}_2)$.
\end{proof}

\begin{proof}[Proof of Theorem \ref{thm:increase_expressivity}]
    Let $\mathcal{G}_1$ and $\mathcal{G}_2$ be two geometric graphs that are $k$-(I)GWL-indistinguishable, i.e., $\mathcal{G}_{1} =_{k\text{-(I)GWL}} \mathcal{G}_{2}$. By the assumption of the theorem, $\mathtt{SEL}$ is a function that distinguishes any pair of $k$-(I)GWL-indistinguishable graphs. Therefore:
\begin{align}
    \mathcal{G}_{1} &=_{k\text{-(I)GWL}} \mathcal{G}_{2} \Rightarrow \\
    \mathtt{SEL}(\mathcal{G}_{1}) &\neq_{k\text{-(I)GWL}} \mathtt{SEL}(\mathcal{G}_{2}) \tag*{(by assumption on $\mathtt{SEL}$)}
\end{align}

    By Lemma \ref{lmma:injective_red}, if we choose an injective function $\mathtt{RED}$, then:
\begin{equation}
    \mathtt{SEL}(\mathcal{G}_{1}) \neq_{k\text{-(I)GWL}} \mathtt{SEL}(\mathcal{G}_{2}) \Rightarrow
    \mathtt{POOL}(\mathcal{G}_1) \neq_{k\text{-(I)GWL}} \mathtt{POOL}(\mathcal{G}_2)
\end{equation}
where $\mathtt{POOL} = \mathtt{RED}\circ \mathtt{SEL}$.

    Combining these steps, we have shown that there exists a pooling operator $\mathtt{POOL}$ that maps $k$-(I)GWL-indistinguishable graphs to $k$-(I)GWL-distinguishable graphs, thus \textbf{increasing expressivity}.
\end{proof}

\end{document}